\documentclass[onefignum,onetabnum]{siamonline1116}

\usepackage{amsfonts}
\usepackage{mathtools}
\usepackage{booktabs}
\usepackage{algorithm}
\usepackage{algpseudocode}
\usepackage{microtype}

\newsiamremark{remark}{Remark}

\headers{Cascade Token Selection}{S.\ J.\ Thomas}

\newcommand{\R}{\mathbb{R}}
\newcommand{\norm}[1]{\|#1\|}

\title{Cascade Token Selection for Transformer Attention
  Acceleration}

\author{Stephen J.\ Thomas\thanks{Department of Computer Science and
  Engineering, Lehigh University, Bethlehem, PA 18015
  (\texttt{sjt223@lehigh.edu}).}}

\begin{document}
\maketitle

\begin{abstract}
A method is presented for reducing the cost of representative token
selection in transformer attention layers by exploiting the coherence
of the representative set across depth.  Activation Decorrelation
Attention (ADA) selects $r \ll T$ representative tokens at each layer
via a Gram threshold and computes attention on the compressed $r
\times r$ problem, but the selection requires a $T \times T$ Gram
matrix at every layer.  The cascade mechanism introduced here inherits
the representative set from layer $l$ to layer $l+1$, validates it
via a $(T - r) \times r$ cross-Gram computation, and updates it with
a small number of additions and removals.  The cost of the selection
step drops from $O(T^2 d)$ to $O(T r d)$ per layer.  Validation on
three model families (GPT-2 124M, GPT-J 6B, OPT 6.7B) on AMD MI300X
demonstrates Gram operation savings of $22\%$ to $63\%$ with mean
Jaccard overlap of $0.83$ to $0.94$ between consecutive layers.  The
cascade reveals that the set of informative tokens is a structural
property of the input that propagates coherently through the depth of
the network: the same tokens carry the non-redundant information at
layer $l$ and at layer $l+1$.
\end{abstract}

\section{Introduction}
\label{sec:intro}

This section introduces the attention cost bottleneck, the ADA
mechanism for attention compression, and the cascade extension that
reduces the selection overhead.

Self-attention in transformer models computes, at each layer $l$ and
each head $h$, the matrix product $\text{Attn}(Q, K, V) =
\text{softmax}(QK^\top / \sqrt{d_h}) V$, where $Q, K, V \in \R^{T
\times d_h}$ are the query, key, and value matrices and $T$ is the
sequence length.  The cost of the $QK^\top$ product is $O(T^2 d_h)$
FLOPs, and the cost of the softmax-weighted sum is $O(T^2 d_h)$
FLOPs, giving a total attention cost of $O(T^2 d_h)$ per head per
layer.  For long sequences ($T \geq 2048$), attention becomes a
significant fraction of the forward pass at batch size one,
particularly after FlashAttention~\cite{Dao2022} has reduced the
memory-bandwidth cost to the algorithmic minimum.

The companion paper on Activation Decorrelation Attention
(ADA)~\cite{Thomas2026ADA} introduced a mechanism for reducing the
attention cost from $O(T^2)$ to $O(r^2)$ where $r$ is the number of
representative tokens.  The mechanism is the Gram threshold: the
activation matrix $X \in \R^{T \times d}$ is row-normalized and the
Gram matrix $G = \hat{X} \hat{X}^\top \in \R^{T \times T}$ is
computed, where $\hat{X}$ is the row-normalized activation.  A token
$t$ is declared representative if its maximum correlation with all
earlier tokens is below $1 - \tau^2$, where $\tau$ is the Gram
threshold.  The remaining tokens are declared redundant and their
attention output is approximated from the representative tokens.
ADA achieves $2.70\times$ speedup on ESM-2 attention at $T = 2050$
with $r = 22$ representatives out of $T = 2050$ tokens.

The selection step itself, however, requires computing the $T \times
T$ Gram matrix $G$ at every layer.  The cost of this computation is
$O(T^2 d)$ per layer, which for $T = 2048$ and $d = 4096$ is $17$
billion FLOPs per layer, comparable to the attention cost that ADA
is designed to reduce.  For the selection to be practical, it must be
cheaper than the attention it replaces.  At short sequences ($T =
512$) and low representative counts ($r \ll T$), the $r \times r$
attention is cheap and the $T \times T$ Gram dominates.

The observation motivating the present paper is that the representative
set at layer $l$ is strongly predictive of the representative set at
layer $l+1$.  The transformer block $T^{(l)}: \R^d \to \R^d$ is a
smooth, Lipschitz-continuous map: tokens that are near-duplicates in
activation space at layer $l$ remain near-duplicates at layer $l+1$,
and tokens that are decorrelated at layer $l$ remain decorrelated at
layer $l+1$.  The representative set therefore persists across depth,
with small perturbations at each layer as contextual information is
added.

The contribution of the present paper is a cascade mechanism that
exploits this depth coherence.  Instead of computing the full $T
\times T$ Gram matrix at every layer, the cascade inherits the
representative set from layer $l$ and validates it at layer $l+1$ by
computing two smaller Gram products: an $r \times r$ product among
the inherited representatives (to check whether any have become
redundant) and a $(T - r) \times r$ cross-product between
non-representatives and representatives (to check whether any
non-representatives have become decorrelated).  The total cost per
layer drops from $O(T^2 d)$ to $O(T r d)$, a factor of $T/r$
cheaper.  For $r/T = 0.2$, this is a $5\times$ reduction in the
selection cost.

The cascade is the token-dimension analog of the depth-coherent basis
inheritance developed in the companion paper on Gated Subspace
Inference (GSI)~\cite{Thomas2026GSI}.  GSI exploits the coherence of
the activation subspace across depth (the $d$-axis); cascade ADA
exploits the coherence of the representative token set across depth
(the $T$-axis).  The two cascades operate on orthogonal dimensions and
are complementary: GSI reduces the linear-layer weight reads by a
factor of $d/k$, while cascade ADA reduces the attention selection
cost by a factor of $T/r$.  Together they cover the full transformer
forward pass with depth-coherent acceleration on both axes.

\section{Background: ADA and the Gram threshold}
\label{sec:background}

This section reviews the ADA mechanism for attention compression and
defines the Gram threshold, the representative set, and the
selection cost.

\subsection{The token activation matrix}

Let $X^{(l)} \in \R^{T \times d}$ denote the activation matrix at
layer $l$, where row $t$ is the hidden state of token $t$.  The
row-normalized activation is $\hat{X} = D^{-1} X$ where $D =
\text{diag}(\norm{x_1}, \ldots, \norm{x_T})$.  The Gram matrix is
$G = \hat{X} \hat{X}^\top \in \R^{T \times T}$, with $G_{st} =
\cos\angle(x_s, x_t)$ the cosine similarity between the activations
of tokens $s$ and $t$.

\subsection{The Gram threshold and representative selection}

\begin{definition}[Gram threshold]\label{def:gram}
Let $\tau \in (0, 1)$ be the Gram threshold.  The lower-triangular
maximum correlation of token $t$ is $\gamma_t = \max_{s < t}
|G_{st}|$.  Token $t$ is declared representative if $\gamma_t <
1 - \tau^2$, and redundant otherwise.  The representative set at
layer $l$ is $\mathcal{R}^{(l)} = \{t : \gamma_t < 1 - \tau^2\}$,
with cardinality $r^{(l)} = |\mathcal{R}^{(l)}|$.
\end{definition}

The selection is sequential: the first token is always representative,
and each subsequent token is compared against all earlier tokens.  A
high Gram threshold $\tau$ is permissive (more tokens declared
redundant, smaller $r$, more compression); a low threshold is
conservative (fewer tokens declared redundant, larger $r$, less
compression but higher fidelity).

\subsection{Attention on the representative set}

Once the representative set $\mathcal{R}$ is selected, the attention
output for all tokens is approximated from the attention output on the
representative set.  The query, key, and value matrices are restricted
to the representative rows: $Q_{\mathcal{R}}, K_{\mathcal{R}},
V_{\mathcal{R}} \in \R^{r \times d_h}$.  The attention is computed on
the $r \times r$ problem:
\begin{equation}\label{eq:attn_compressed}
\text{Attn}_{\mathcal{R}} = \text{softmax}(Q_{\mathcal{R}}
K_{\mathcal{R}}^\top / \sqrt{d_h}) V_{\mathcal{R}}.
\end{equation}
The attention output for a redundant token $t \notin \mathcal{R}$ is
approximated by the attention output of its nearest representative,
the representative $s^* \in \mathcal{R}$ maximizing $|G_{ts^*}|$.
The approximation error is bounded by the Gram threshold:
$\norm{\text{Attn}(t) - \text{Attn}(s^*)} \leq O(\tau)$.

The approximation quality depends on two properties of the Gram
structure.  The first is the correlation bound: if $G_{ts^*} >
1 - \tau^2$, then the query vectors $q_t$ and $q_{s^*}$ are close
in cosine similarity, and the attention distributions
$\text{softmax}(q_t K^\top / \sqrt{d_h})$ and
$\text{softmax}(q_{s^*} K^\top / \sqrt{d_h})$ are close in total
variation distance.  The second is the value alignment: if $v_t$ and
$v_{s^*}$ are also close (guaranteed by the same Gram threshold
applied to the value activations), then the attention outputs are
close.

\begin{proposition}[Attention error bound]\label{prop:attn_error}
Let token $t$ be redundant with nearest representative $s^*$
satisfying $\norm{\hat{x}_t - \hat{x}_{s^*}} \leq \tau$.  Then the
attention output error for head $h$ satisfies
\begin{equation}\label{eq:attn_error}
\norm{\text{Attn}_h(t) - \text{Attn}_h(s^*)} \leq
\tau \cdot (\norm{W_Q}_2 \norm{K}_F / \sqrt{d_h} + \norm{W_V}_2)
\cdot \norm{x_t},
\end{equation}
where $W_Q$ and $W_V$ are the query and value projection matrices.
\end{proposition}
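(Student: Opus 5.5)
The plan is to split the error into a term from the change in attention weights and a term from the change in the value read out, and to bound each by a Lipschitz argument. Write $q_t = W_Q x_t$ and $v_t = W_V x_t$. Let $p_t = \text{softmax}(q_t K^\top/\sqrt{d_h})$, so that $\text{Attn}_h(t) = p_t^\top V$ with $V$ the value matrix. The difference is then
\[
p_t^\top V - p_{s^*}^\top V = (p_t - p_{s^*})^\top V.
\]
As stated, the proposition pairs the $W_V$ term with the value of the token itself. So we read $\text{Attn}_h(t)$ as in the residual/compressed formulation: the output attached to token $t$ combines the attention-weighted keys and values with the token's own value contribution. The bound therefore has two pieces. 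The first is $\|(p_t-p_{s^*})^\top V\|$, the weight-perturbation term. The second is $\|v_t - v_{s^*}\| \le \|W_V\|_2 \|x_t - x_{s^*}\|$, the value-alignment term described in the preceding paragraph. We apply the triangle inequality to separate them.

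\textbf{Step one: reduce everything to $\|x_t - x_{s^*}\|$.} The hypothesis is on normalized vectors, and the bound carries a factor $\|x_t\|$. We compare at the common scale $\|x_t\|$, i.e.\ compare $x_t$ with $\|x_t\|\hat{x}_{s^*}$. This is legitimate because LayerNorm precedes the projections in every model considered, so only directions matter up to the scale $\|x_t\|$. This gives
\[
\|x_t - \|x_t\|\hat x_{s^*}\| = \|x_t\|\,\|\hat x_t - \hat x_{s^*}\| \le \tau \|x_t\|.
\]
Hence $\|q_t - q_{s^*}\| \le \tau\|W_Q\|_2\|x_t\|$ and $\|v_t - v_{s^*}\| \le \tau\|W_V\|_2\|x_t\|$, which already yields the second term of \eqref{eq:attn_error}.

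\textbf{Step two, the main obstacle: the softmax term.} We use that the softmax Jacobian $J(z) = \mathrm{diag}(p) - pp^\top$ has spectral norm at most $1$; in fact at most $1/2$, but $1$ suffices. Applying the mean value theorem along the segment between the logits gives
\[
\|p_t - p_{s^*}\|_2 \le \|K(q_t - q_{s^*})\|_2/\sqrt{d_h} \le \|K\|_2\,\|q_t - q_{s^*}\|/\sqrt{d_h}.
\]
We then need to pass from $\|(p_t-p_{s^*})^\top V\|$ to a bound involving only $\|K\|_F$. Here I would either absorb $\|V\|_2$ by the convention that the value rows are normalized, as in ADA's row-normalized setting, or otherwise fold the $V$ dependence into the stated constant. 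Finally, bounding $\|K\|_2 \le \|K\|_F$ yields $\tau\|W_Q\|_2\|K\|_F\|x_t\|/\sqrt{d_h}$.

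The delicate point is that the value matrix must not contribute an extra factor. If it cannot be normalized away, the honest statement carries an additional $\|V\|_2$ multiplying the first term. I would flag this and check whether the paper's convention absorbs it. Summing the two terms gives \eqref{eq:attn_error}.
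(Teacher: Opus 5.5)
The paper gives no proof of Proposition~\ref{prop:attn_error}. It offers only the informal split into softmax closeness plus ``value alignment'' that you reproduce. There is a genuine gap in your proposal, and it cannot be closed: the point you flag as delicate is fatal, because \eqref{eq:attn_error} is false as written.

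Your identity $\text{Attn}_h(t)-\text{Attn}_h(s^*)=(p_t-p_{s^*})^\top V$ is correct. It makes the left side linear in $W_V$, while the first term on the right does not involve $W_V$ at all. The additive $\tau\norm{W_V}_2\norm{x_t}$ cannot compensate, since it measures value differences between nearby tokens, not between the different tokens that $t$ and $s^*$ attend to.

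Concretely, take one head with $d=d_h=4$ and unit tokens $x_{s^*}=(\cos\theta,\sin\theta,0,0)$, $x_t=e_1$, $x_3=e_3$, $x_4=e_4$, in this order, where $2\sin(\theta/2)=\tau$. Listing $x_3,x_4$ first makes the example valid under a causal mask as well. Then $G_{s^*t}=1-\tau^2/2\ge 1-\tau^2$, so $t$ is redundant, tokens $3$ and $4$ are representatives, and $s^*$ is the nearest representative of $t$ with $\norm{\hat x_t-\hat x_{s^*}}=\tau$. Choose the weights as follows:
\begin{itemize}
\item $W_K$ maps $e_3\mapsto e_1$, $e_4\mapsto -e_1$, and $e_1,e_2\mapsto 0$;
\item $W_Q$ maps $e_1\mapsto a\sin\theta\,e_1$, $e_2\mapsto -2a\cos\theta\,e_1$, and $e_3,e_4\mapsto 0$;
\item $W_V=cI$.
\end{itemize}
The logits are $z_t=\tfrac12(0,0,a\sin\theta,-a\sin\theta)$ and $z_{s^*}=\tfrac12(0,0,-a\sin\theta\cos\theta,a\sin\theta\cos\theta)$. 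For large $a$ this gives $p_t\approx e_3$ and $p_{s^*}\approx e_4$, so the left side is close to $\sqrt2\,c$. The right side is at most $\tau(\sqrt2\,a+c)$, since $\norm{W_Q}_2\le 2a$ and $\norm{K}_F=\sqrt2$. Fixing $a$ large and letting $c\to\infty$ violates the bound for every $\tau\in(0,1)$. So no normalization convention can ``absorb'' $\norm{V}_2$: the value scale must multiply the softmax term.

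Two further steps would need repair even for a corrected, multiplicative bound.

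\emph{The value-alignment term.} Your $\norm{W_V}_2$ term is not derived from the left side. Both outputs read from the same $V$, so no $v_t-v_{s^*}$ summand appears. Reinterpreting $\text{Attn}_h(t)$ as containing the token's own value changes the statement rather than proving it.

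\emph{The rescaling in Step one.} It silently assumes $\norm{x_{s^*}}=\norm{x_t}$. The query is $q_{s^*}=W_Qx_{s^*}$, not $W_Q\norm{x_t}\hat x_{s^*}$, and $x_{s^*}=2x_t$ satisfies the hypothesis while $\norm{x_t-x_{s^*}}=\norm{x_t}$. LayerNorm does not supply this, because it centers before normalizing: $m\mathbf{1}+u$ and $m\mathbf{1}-u$ with $u\perp\mathbf{1}$ and $m$ large have nearly equal normalized directions but antipodal centered parts. In any case, the vector entering $W_Q$ would then be $\mathrm{LN}(x_t)$, not $x_t$.

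If you state equal norms of the projection inputs as an explicit hypothesis, your argument goes through. It uses $\norm{\mathrm{diag}(p)-pp^\top}_2\le\tfrac12$ along the segment of logits, then $\norm{(p_t-p_{s^*})^\top V}\le\norm{V}_2\norm{p_t-p_{s^*}}_2$, then $\norm{K}_2\le\norm{K}_F$. This correctly proves $\norm{\text{Attn}_h(t)-\text{Attn}_h(s^*)}\le\tfrac12\,\tau\norm{W_Q}_2\norm{K}_F\norm{V}_2\norm{x_t}/\sqrt{d_h}$, which is the provable replacement for \eqref{eq:attn_error}.
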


The bound~\eqref{eq:attn_error} shows that the attention error is
proportional to $\tau$ times the weight norms, analogous to the
GSI error bound~\cite{Thomas2026GSI} where the linear-layer error
is proportional to $\varepsilon$ times $\norm{W}_2$.

\subsection{The selection cost problem}

The cost of computing the Gram matrix $G$ at each layer is $O(T^2 d)$
FLOPs (the matrix product $\hat{X} \hat{X}^\top$).  For $T = 512$
and $d = 4096$, this is $\sim\!1$ GFLOP per layer.  For $T = 2048$
and $d = 4096$, this is $\sim\!17$ GFLOP per layer.  The attention
cost at the same $T$ and $d_h = 128$ is $2 T^2 d_h = 2 \times 2048^2
\times 128 = 1.07$ GFLOP per head, or $\sim\!17$ GFLOP across $16$
heads.  The selection cost and the attention cost are comparable, which
means the selection overhead limits the net speedup of ADA at moderate
$T$.  Reducing the selection cost is therefore essential for ADA to
achieve its theoretical compression at practical sequence lengths.

The selection cost has a quadratic dependence on $T$, the same scaling
as the attention cost it is designed to reduce.  Any method that
computes the full $T \times T$ Gram matrix at every layer will have
this quadratic overhead.  The cascade mechanism breaks the quadratic
scaling by replacing the $T^2$ Gram computation with a $Tr$ cross-Gram
computation, where $r \ll T$ is the inherited representative count.

\subsection{The Gram threshold as a greedy covering}

The Gram threshold selection (Definition~\ref{def:gram}) is a greedy
covering of the token set: the representatives form a maximal
$\tau$-separated set in the cosine-similarity metric.  Every
non-representative token is within $\tau$ of at least one
representative.  This covering property ensures that the attention
output of every token can be approximated from the nearest
representative with error bounded by $O(\tau)$.

The covering number $r(\tau)$, the size of the representative set
as a function of $\tau$, is a measure of the intrinsic complexity
of the token distribution at layer $l$.  A large $r$ indicates high
token diversity (many decorrelated directions); a small $r$ indicates
high token redundancy (many near-duplicate tokens).  The layer-wise
profile of $r(\tau)$ tracks the information structure of the
transformer: low at the embedding (small vocabulary), growing through
the middle layers (contextual differentiation), and decreasing at the
output (consolidation).

The covering number is related to the $\tau$-entropy of the token
distribution in the cosine-similarity metric, connecting the Gram
threshold to the information-theoretic literature on covering
numbers and metric entropy~\cite{Kolmogorov1959}.

\section{Depth coherence of the representative set}
\label{sec:coherence}

This section presents the empirical measurement of depth coherence and
the theoretical argument for why it holds.

\subsection{The Jaccard index}

The depth coherence of the representative set is measured by the
Jaccard index between the representative sets at consecutive layers:
\begin{equation}\label{eq:jaccard}
J(l, l+1) = \frac{|\mathcal{R}^{(l)} \cap \mathcal{R}^{(l+1)}|}
{|\mathcal{R}^{(l)} \cup \mathcal{R}^{(l+1)}|}.
\end{equation}
A Jaccard index of $1.0$ indicates identical representative sets; a
Jaccard index of $0.0$ indicates disjoint sets.

\subsection{Measurements}

Table~\ref{tab:jaccard} reports the Jaccard index between consecutive
layers for three models at $\tau = 0.30$ and $T = 512$.

\begin{table}[ht]
\centering
\caption{Jaccard index between representative sets at consecutive
layers, $\tau = 0.30$, $T = 512$.  $r^{(l)}$ = independent
representative count.}
\label{tab:jaccard}
\begin{tabular}{llrrrr}
\toprule
Model & Layers & $r^{(l)}$ & $r^{(l+1)}$ & Jaccard & $|\cap|$ \\
\midrule
\multicolumn{6}{l}{\emph{GPT-2 124M ($d = 768$, $L = 12$)}} \\
& $0 \to 1$   & 384 & 354 & 0.912 & 352 \\
& $2 \to 3$   & 348 & 290 & 0.762 & 276 \\
& $4 \to 5$   & 215 & 233 & 0.882 & 210 \\
& $9 \to 10$  & 252 & 240 & 0.937 & 238 \\
& $10 \to 11$ & 240 & 220 & 0.909 & 219 \\
\midrule
\multicolumn{6}{l}{\emph{GPT-J 6B ($d = 4096$, $L = 28$)}} \\
& $0 \to 1$   &  69 &  80 & 0.863 &  69 \\
& $3 \to 4$   & 107 & 112 & 0.955 & 107 \\
& $5 \to 6$   & 125 & 172 & 0.727 & 125 \\
& $10 \to 11$ & 241 & 253 & 0.945 & 240 \\
& $15 \to 16$ & 255 & 259 & 0.985 & 255 \\
& $20 \to 21$ & 234 & 232 & 0.991 & 232 \\
& $25 \to 26$ & 225 & 221 & 0.982 & 221 \\
\midrule
\multicolumn{6}{l}{\emph{OPT 6.7B ($d = 4096$, $L = 32$)}} \\
& $0 \to 1$   & 512 &   1 & 0.002 &   1 \\
& $1 \to 2$   &   1 &   1 & 1.000 &   1 \\
& $5 \to 6$   &   1 &   2 & 0.500 &   1 \\
& $10 \to 11$ &  37 &  49 & 0.755 &  37 \\
& $15 \to 16$ &  82 &  98 & 0.837 &  82 \\
& $20 \to 21$ & 239 & 257 & 0.930 & 239 \\
& $25 \to 26$ & 259 & 254 & 0.981 & 254 \\
\bottomrule
\end{tabular}
\end{table}

Three patterns emerge.  First, the Jaccard index is high ($> 0.90$)
at the deep layers of all three models, indicating that the
representative set is nearly identical at layers $l$ and $l+1$.
Second, the Jaccard index is lower at the early layers, where the
representative count is changing rapidly as the model develops
contextual representations.  Third, OPT layer $0 \to 1$ shows a
Jaccard of $0.002$, reflecting the structural discontinuity between
OPT's learned positional embeddings (all $512$ tokens are
representative at layer $0$) and the post-embedding representations
(only $1$ token is representative at layers $1$--$5$).

\subsection{Theoretical justification}

The depth coherence of the representative set follows from the
Lipschitz continuity of the transformer block.  The argument is made
precise in the following proposition.

\begin{proposition}[Depth persistence of redundancy]\label{prop:persist}
Let $T^{(l)}: \R^d \to \R^d$ denote the transformer block at layer
$l$, with Lipschitz constant $\Lambda^{(l)}$ with respect to the
Euclidean norm.  Let tokens $s$ and $t$ satisfy
$\norm{\hat{x}_s^{(l)} - \hat{x}_t^{(l)}} < \delta$ at layer $l$.
Then
\begin{equation}\label{eq:lipschitz}
\norm{x_s^{(l+1)} - x_t^{(l+1)}} \leq \Lambda^{(l)} \cdot
\norm{x_s^{(l)} - x_t^{(l)}}.
\end{equation}
After row-normalization, the cosine distance satisfies
\begin{equation}\label{eq:cos_persist}
\norm{\hat{x}_s^{(l+1)} - \hat{x}_t^{(l+1)}} \leq
\frac{\Lambda^{(l)} \cdot \norm{x_s^{(l)} - x_t^{(l)}}}
{\min(\norm{x_s^{(l+1)}}, \norm{x_t^{(l+1)}})}.
\end{equation}
Provided $\Lambda^{(l)} \cdot \norm{x_s^{(l)} - x_t^{(l)}} /
\min(\norm{x_s^{(l+1)}}, \norm{x_t^{(l+1)}}) < \sqrt{2(1 - (1 -
\tau^2))}$, the two tokens remain redundant at layer $l+1$ and the
representative set is preserved.
\end{proposition}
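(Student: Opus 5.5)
The plan is to chain three elementary estimates: the Lipschitz bound for the block, a normalization inequality that turns an absolute distance into a distance between unit vectors, and the identity linking that distance to the Gram entry $G_{st}$ in Definition~\ref{def:gram}. Throughout I read $T^{(l)}$ as it is stated, namely a map $\R^d \to \R^d$ acting on each token's hidden state, with $x_s^{(l+1)} = T^{(l)}(x_s^{(l)})$. The first estimate \eqref{eq:lipschitz} is then immediate: apply the definition of the Lipschitz constant $\Lambda^{(l)}$ to the two inputs $x_s^{(l)}$ and $x_t^{(l)}$. The hypothesis $\norm{\hat{x}_s^{(l)} - \hat{x}_t^{(l)}} < \delta$ is not needed for this step. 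It only explains why the right-hand side is small when the two tokens have comparable norms at layer $l$.

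For \eqref{eq:cos_persist} I would prove a lemma: for nonzero $a, b \in \R^d$ with unit vectors $\hat a = a/\norm{a}$ and $\hat b = b/\norm{b}$,
\begin{equation*}
\norm{\hat a - \hat b} \le \frac{\norm{a - b}}{\sqrt{\norm{a}\norm{b}}} \le \frac{\norm{a-b}}{\min(\norm{a},\norm{b})}.
\end{equation*}
The proof expands both squared norms. First, $\norm{\hat a - \hat b}^2 = 2(1-c)$, where $c$ is the cosine of the angle between $a$ and $b$. Second, $\norm{a-b}^2 = \norm{a}^2 + \norm{b}^2 - 2\norm{a}\norm{b}c$. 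Applying AM--GM to the first two terms gives $\norm{a-b}^2 \ge 2\norm{a}\norm{b}(1-c) = \norm{a}\norm{b}\,\norm{\hat a - \hat b}^2$. Taking square roots yields the first inequality, and the second follows from $\sqrt{\norm{a}\norm{b}} \ge \min(\norm{a},\norm{b})$. Setting $a = x_s^{(l+1)}$ and $b = x_t^{(l+1)}$ and then inserting \eqref{eq:lipschitz} gives \eqref{eq:cos_persist}.

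For the final clause I use the same identity, $\norm{\hat{x}_s - \hat{x}_t}^2 = 2(1 - G_{st})$. Suppose the right-hand side of \eqref{eq:cos_persist} is strictly less than $\sqrt{2(1-(1-\tau^2))} = \sqrt{2}\,\tau$. Then $2(1 - G^{(l+1)}_{st}) < 2\tau^2$, so $G^{(l+1)}_{st} > 1 - \tau^2$ and therefore $|G^{(l+1)}_{st}| > 1-\tau^2$. Take $s < t$ to be the earlier token that witnessed the redundancy of $t$ at layer $l$. Then $\gamma_t^{(l+1)} \ge |G^{(l+1)}_{st}| > 1-\tau^2$, so $t$ is redundant at layer $l+1$.

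The main obstacle is interpreting ``the representative set is preserved'' correctly. The argument above only shows that \emph{redundancy persists}: every token whose covering pair satisfies the hypothesis stays out of $\mathcal{R}^{(l+1)}$. It gives no control in the other direction, because a token that was representative at layer $l$ could become correlated with an earlier token at layer $l+1$. I would therefore state the conclusion as the inclusion $\mathcal{R}^{(l+1)} \subseteq \mathcal{R}^{(l)}$, restricted to tokens whose witness pair meets the hypothesis, and say plainly that this is the sense of ``preserved.'' A symmetric statement would require a matching lower bound on $\norm{x_s^{(l+1)} - x_t^{(l+1)}}$, for example a bi-Lipschitz assumption on $T^{(l)}$, and that is not among the hypotheses. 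The cascade is designed to handle exactly this gap: the cross-Gram validation step catches any tokens whose status changes.
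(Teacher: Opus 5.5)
Your proof is correct and follows the same three-step skeleton as the paper: the Lipschitz bound, a normalization estimate, and the identity $\norm{\hat x_s-\hat x_t}^2=2(1-G_{st})$. The normalization step, however, is genuinely different, and the difference matters.

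The paper splits $\hat x_s-\hat x_t$ with the triangle inequality, which loses a factor of $2$. As the paper concedes (``up to the factor of $2$''), its argument gives \eqref{eq:cos_persist} only with $2\Lambda^{(l)}$ in the numerator. Under the stated proviso it then yields only $\norm{\hat x_s^{(l+1)}-\hat x_t^{(l+1)}}<2\sqrt2\,\tau$, which does not imply redundancy.

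Your AM--GM expansion, equivalently $\norm{a-b}^2=(\norm a-\norm b)^2+\norm a\norm b\,\norm{\hat a-\hat b}^2$, gives $\norm{\hat a-\hat b}\le\norm{a-b}/\sqrt{\norm a\norm b}$. This establishes \eqref{eq:cos_persist} with the stated constant, and even with the geometric mean in place of the minimum. It therefore makes the final threshold clause valid exactly as written. The triangle-inequality route buys only generality, since it works in any normed space. Yours uses the inner-product structure, which the Euclidean setting provides.

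Your handling of the last clause is also more careful than the paper's. You apply the threshold to the later token of the pair, as Definition~\ref{def:gram} requires. You also read ``the representative set is preserved'' as the one-sided inclusion $\mathcal{R}^{(l+1)}\subseteq\mathcal{R}^{(l)}$, for redundant tokens whose witness pair satisfies the proviso. That is the right reading, because upper bounds on distances cannot stop a representative from becoming redundant. The paper asserts preservation without addressing this.
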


\begin{proof}
Inequality~\eqref{eq:lipschitz} is the definition of the Lipschitz
constant.  For the row-normalized versions, write
$\hat{x}_s^{(l+1)} = x_s^{(l+1)} / \norm{x_s^{(l+1)}}$ and
similarly for $t$.  The triangle inequality gives
\begin{align*}
\norm{\hat{x}_s^{(l+1)} - \hat{x}_t^{(l+1)}}
&\leq \frac{1}{\norm{x_s^{(l+1)}}} \norm{x_s^{(l+1)} - x_t^{(l+1)}}
+ \norm{x_t^{(l+1)}} \left| \frac{1}{\norm{x_s^{(l+1)}}} -
\frac{1}{\norm{x_t^{(l+1)}}} \right| \\
&\leq \frac{2}{\min(\norm{x_s^{(l+1)}}, \norm{x_t^{(l+1)}})}
\norm{x_s^{(l+1)} - x_t^{(l+1)}}.
\end{align*}
Combining with~\eqref{eq:lipschitz} gives~\eqref{eq:cos_persist}
up to the factor of $2$.  The cosine similarity threshold
$G_{st} > 1 - \tau^2$ is equivalent to $\norm{\hat{x}_s - \hat{x}_t}
< \sqrt{2(1 - (1-\tau^2))} = \sqrt{2}\tau$.
\end{proof}

The condition in Proposition~\ref{prop:persist} is satisfied when the
Lipschitz constant is moderate and the activation norms do not change
drastically between layers.  Trained transformers satisfy both
conditions because the residual connection $x^{(l+1)} = x^{(l)} +
F^{(l)}(x^{(l)})$ ensures that $\Lambda^{(l)} \approx 1 + \lambda$
where $\lambda$ is the Lipschitz constant of the non-residual
component $F^{(l)}$, which is typically small relative to $1$.  The
residual connection also ensures that $\norm{x^{(l+1)}} \approx
\norm{x^{(l)}}$ (up to the LayerNorm renormalization), so the
denominator in~\eqref{eq:cos_persist} is close to the numerator
scaling.

The Jaccard measurements confirm that the condition is satisfied
empirically at all three models and all layers beyond the initial
embedding transition.

\subsection{The redundancy manifold}

The depth coherence has a geometric interpretation.  At each layer
$l$, the set of redundant tokens forms a cluster around each
representative in the cosine-similarity metric.  The clusters define a
partition of the token set into neighborhoods of radius $\tau$ centered
at the representatives.  The depth coherence means that this partition
is approximately preserved by the transformer block: the clusters at
layer $l+1$ are close to the images of the clusters at layer $l$ under
the transformer block.

The partition can be viewed as a discretization of the token
activation manifold.  The representatives are the cluster centers, and
the redundant tokens are the points within $\tau$ of a center.  The
covering number $r(\tau)$ is the number of clusters.  The depth
coherence means that the discretization is stable across the depth of
the network: the cluster structure propagates coherently, with the
cluster centers rotating slowly through activation space as contextual
information is added.

This stability is the token-dimension analog of the activation
subspace coherence observed in GSI~\cite{Thomas2026GSI}, where the
principal directions of the activation manifold rotate slowly across
depth.  The two observations are manifestations of the same underlying
property: the transformer block is a near-isometry on the activation
manifold, preserving both the subspace structure (GSI) and the cluster
structure (cascade ADA).

\subsection{The role of the residual connection}

The residual connection $x^{(l+1)} = x^{(l)} + F^{(l)}(x^{(l)})$ is
the structural feature that ensures depth coherence.  Without the
residual connection, the Lipschitz constant of $T^{(l)} = F^{(l)}$
could be arbitrarily large, and the cluster structure would be
destroyed at each layer.  With the residual connection, $T^{(l)}(x) =
x + F^{(l)}(x)$, and the Lipschitz constant is $\Lambda^{(l)} =
1 + \lambda^{(l)}$ where $\lambda^{(l)} = \text{Lip}(F^{(l)})$.

The perturbation $F^{(l)}(x)$ adds contextual information (from the
attention mechanism) and nonlinear features (from the MLP).  The
residual connection ensures that this perturbation is additive, not
multiplicative: the token representations at layer $l+1$ are the
representations at layer $l$ plus a correction, not a transformation
of the representations at layer $l$.  The additive structure is what
makes the cluster centers approximately stable across depth.

The LayerNorm operation between layers renormalizes the
representations, which could in principle destroy the cluster
structure by projecting all representations onto the unit sphere and
then rescaling.  In practice, the LayerNorm scaling factors
$\gamma_i$ and biases $\beta_i$ are learned parameters that preserve
the relative distances between tokens (because all tokens at the same
position in the sequence share the same LayerNorm parameters), so the
cluster structure is preserved up to a global affine transformation.

\section{The cascade mechanism}
\label{sec:cascade}

This section presents the cascade algorithm, its cost analysis, and
its error properties.

\subsection{Algorithm}

\begin{algorithm}[ht]
\caption{Cascade Token Selection}
\label{alg:cascade}
\begin{algorithmic}[1]
\Require Activation $X^{(l+1)} \in \R^{T \times d}$, inherited
  representative set $\mathcal{R}^{(l)}$, threshold $\tau$
\State $\hat{X} \leftarrow D^{-1} X^{(l+1)}$ \Comment{row-normalize}
\State $\hat{X}_{\mathcal{R}} \leftarrow \hat{X}[\mathcal{R}^{(l)}]$
  \Comment{$r \times d$}
\State $G_{\mathcal{R}} \leftarrow \hat{X}_{\mathcal{R}}
  \hat{X}_{\mathcal{R}}^\top$ \Comment{$r \times r$ Gram among reps}
\State $\gamma_s^{\rm rep} \leftarrow \max_{s' < s, s' \in
  \mathcal{R}} |G_{\mathcal{R}}(s, s')|$ for each $s \in
  \mathcal{R}^{(l)}$
\State $\mathcal{R}_{\rm valid} \leftarrow \{s \in \mathcal{R}^{(l)}
  : \gamma_s^{\rm rep} < 1 - \tau^2\}$ \Comment{keep valid reps}
\State $\hat{X}_{\bar{\mathcal{R}}} \leftarrow \hat{X}[\bar{
  \mathcal{R}}^{(l)}]$ \Comment{$(T - r) \times d$, non-reps}
\State $C \leftarrow \hat{X}_{\bar{\mathcal{R}}}
  \hat{X}_{\mathcal{R}_{\rm valid}}^\top$
  \Comment{$(T-r) \times r_{\rm valid}$ cross-Gram}
\State $\gamma_t^{\rm cross} \leftarrow \max_{s \in
  \mathcal{R}_{\rm valid}} |C(t, s)|$ for each $t \in
  \bar{\mathcal{R}}^{(l)}$
\State $\mathcal{R}_{\rm new} \leftarrow \{t \in
  \bar{\mathcal{R}}^{(l)} : \gamma_t^{\rm cross} < 1 - \tau^2\}$
  \Comment{new reps}
\State $\mathcal{R}^{(l+1)} \leftarrow \mathcal{R}_{\rm valid}
  \cup \mathcal{R}_{\rm new}$
\State \Return $\mathcal{R}^{(l+1)}$
\end{algorithmic}
\end{algorithm}

At layer $0$, the full $T \times T$ Gram matrix is computed and the
representative set is selected by the standard ADA procedure.  At
subsequent layers, Algorithm~\ref{alg:cascade} inherits the
representative set from the previous layer and updates it.

\subsection{Cost analysis}

The cost of Algorithm~\ref{alg:cascade} at layer $l+1$ is:
\begin{equation}\label{eq:cascade_cost}
C_{\rm cascade} = r^2 d + (T - r) r d = T r d,
\end{equation}
compared to $T^2 d$ for the independent Gram computation.  The
speedup on the selection step is
\begin{equation}\label{eq:selection_speedup}
S_{\rm select} = \frac{T^2 d}{T r d} = \frac{T}{r}.
\end{equation}
For $T = 512$ and $r = 69$ (GPT-J layer $0$), $S_{\rm select} =
7.4$.  For $T = 2048$ and $r = 22$ (ESM-2, from~\cite{Thomas2026ADA}),
$S_{\rm select} = 93$.  The cascade is most valuable when the
compression is highest ($r \ll T$), which is precisely the regime
where ADA provides the most attention speedup.

\subsection{Turnover rate}

The turnover rate at layer $l+1$ is the fraction of the inherited
representative set that is modified:
\begin{equation}\label{eq:turnover}
\eta^{(l+1)} = \frac{|\text{additions}| + |\text{removals}|}
{|\mathcal{R}^{(l)}|}.
\end{equation}
A low turnover rate indicates that the inherited set is nearly valid
at the next layer, confirming the depth coherence.

Table~\ref{tab:turnover} reports the turnover rate at selected layers
for GPT-J 6B.

\begin{table}[ht]
\centering
\caption{Cascade turnover rate for GPT-J 6B, $\tau = 0.30$, $T = 512$.}
\label{tab:turnover}
\begin{tabular}{lrrrrr}
\toprule
Layers & $r_{\rm inherit}$ & $r_{\rm indep}$ & Adds & Removes &
  Turnover \\
\midrule
$0 \to 1$   & 200 &  80 & 131 &   0 & 189.9\% \\
$3 \to 4$   & 147 & 112 &  38 &  55 &  56.7\% \\
$9 \to 10$  & 264 & 241 &   5 &   3 &   3.1\% \\
$14 \to 15$ & 278 & 255 &   2 &   5 &   2.5\% \\
$19 \to 20$ & 263 & 234 &  10 &  17 &  10.0\% \\
$24 \to 25$ & 243 & 225 &   5 &  10 &   6.0\% \\
$26 \to 27$ & 237 & 207 &  20 &  23 &  17.9\% \\
\bottomrule
\end{tabular}
\end{table}

The turnover rate is high at the early layers ($190\%$ at layer $0 \to
1$, reflecting the structural transition from the embedding to the
first transformer block) and drops to $2.5$--$10\%$ from layer $9$
onward.  The deep layers require almost no modification to the
inherited representative set.  The slight increase at the final layer
($17.9\%$ at $26 \to 27$) reflects the consolidation toward the
output head, where some tokens become redundant as the representation
narrows.

\subsection{Error analysis}

\begin{proposition}[Cascade representative quality]\label{prop:quality}
The cascade representative set $\mathcal{R}^{(l+1)}_{\rm cascade}$
is a superset of the independent representative set
$\mathcal{R}^{(l+1)}_{\rm indep}$: every token that is representative
under independent selection is also representative under cascade
selection.  The cascade may include additional tokens (false
representatives) but does not exclude any true representatives.
\end{proposition}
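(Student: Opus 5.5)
The plan is to compare, token by token, the statistic the cascade thresholds against the lower-triangular maximum correlation $\gamma_t$ of Definition~\ref{def:gram}, all computed on the same layer-$(l+1)$ Gram matrix $G = \hat{X}\hat{X}^\top$. The guiding observation is that every quantity Algorithm~\ref{alg:cascade} thresholds is a maximum of $|G_{st}|$ over a \emph{subset} of the comparison set $\{s : s < t\}$. A maximum over a subset cannot exceed the maximum over the full set. So whenever $\gamma_t < 1-\tau^2$, the cascade statistic for $t$ is also below $1-\tau^2$, and $t$ is retained or added. Fix $t \in \mathcal{R}^{(l+1)}_{\rm indep}$, so that $\max_{s<t}|G_{st}| < 1-\tau^2$. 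I split into two cases according to whether $t$ was inherited.

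\emph{Case 1: $t \in \mathcal{R}^{(l)}$.} Line 4 of Algorithm~\ref{alg:cascade} computes
\[
\gamma_t^{\rm rep} = \max_{s<t,\ s\in\mathcal{R}^{(l)}} |G_{st}|,
\]
since $G_{\mathcal{R}}$ is simply the submatrix of $G$ indexed by $\mathcal{R}^{(l)}$. The index set $\{s<t : s\in\mathcal{R}^{(l)}\}$ is contained in $\{s<t\}$, so $\gamma_t^{\rm rep} \le \gamma_t < 1-\tau^2$. If the index set is empty, we use the convention $\max\emptyset = 0$, exactly as for the first token in Definition~\ref{def:gram}. Hence $t \in \mathcal{R}_{\rm valid} \subseteq \mathcal{R}^{(l+1)}_{\rm cascade}$.

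\emph{Case 2: $t \notin \mathcal{R}^{(l)}$.} Here $t$ is tested in line 8 against $\mathcal{R}_{\rm valid} \subseteq \mathcal{R}^{(l)}$. The same subset argument gives $\gamma_t^{\rm cross} \le \gamma_t < 1-\tau^2$, so $t \in \mathcal{R}_{\rm new}$. However, this works only if the maximum in line 8 is read in the same lower-triangular sense as Definition~\ref{def:gram}, that is, as
\[
\max_{s\in\mathcal{R}_{\rm valid},\ s<t} |C(t,s)|.
\]

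This point is the main obstacle. As literally written, line 8 takes the maximum over all of $\mathcal{R}_{\rm valid}$, including representatives $s>t$. Such an $s$ is never compared against $t$ in the independent selection. A genuine independent representative $t$ could therefore be highly correlated with a later inherited representative and be wrongly excluded, and the superset property would fail. My first step is to make the causal restriction $s<t$ explicit in line 8, matching the sequential selection of Definition~\ref{def:gram}. The restriction costs nothing, since it only masks entries of $C$. Under that reading, Case 2 closes by the subset inequality.

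If the unrestricted reading must be kept, I would instead try to show that a later valid representative $s>t$ with $|G_{st}| \ge 1-\tau^2$ forces a contradiction. Note first that $s$ would itself be compared against $t$ in the independent selection. Then I would invoke Proposition~\ref{prop:persist} to relate the layer-$l$ and layer-$(l+1)$ geometry. I expect this route to give only an approximate statement rather than exact inclusion, so I would present the causal-mask version as the proof.

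Finally, combining the two cases yields $\mathcal{R}^{(l+1)}_{\rm indep} \subseteq \mathcal{R}_{\rm valid} \cup \mathcal{R}_{\rm new} = \mathcal{R}^{(l+1)}_{\rm cascade}$. The inclusion is typically strict, for two reasons. Case 1 tests inherited tokens only against inherited predecessors, and Case 2 tests new tokens only against valid inherited ones. In both cases a token that the independent rule would mark redundant, because of a high correlation with a non-inherited predecessor, can survive as a false representative. This accounts for the second sentence of the statement and for the excess $r_{\rm inherit} > r_{\rm indep}$ observed in Table~\ref{tab:turnover}.
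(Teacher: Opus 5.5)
Your proof is correct, and its core is the same as the paper's: each cascade statistic is a maximum of $|G_{st}|$ over a subset of the independent comparison set, so it is at most $\gamma_t$.

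You are more careful than the paper in two places. First, the paper's proof treats only non-inherited tokens. It never checks that an inherited true representative survives the validation in line~5, and your Case~1 supplies that step. Second, the paper says $\gamma_t^{\rm cross}$ is ``computed against a smaller comparison set,'' which tacitly assumes $\mathcal{R}_{\rm valid}$ lies among the tokens preceding $t$. That assumption is exactly the causal mask you make explicit.

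Your worry about the unmasked line~8 is well founded, and the fallback through Proposition~\ref{prop:persist} cannot succeed. A three-token example shows this.
\begin{itemize}
\item Take $\tau = 0.3$, so the threshold is $0.91$.
\item At layer $l$, take three coplanar unit activations with token~$2$ between tokens~$1$ and~$3$, and $G_{12} = 0.911$, $G_{23} = 0.909$, $G_{13} \approx 0.66$. Independent selection gives $\mathcal{R}^{(l)} = \{1,3\}$.
\item Rotate token~$2$ slightly toward token~$3$ so that at layer $l+1$ the two near-threshold values swap: $G_{12} = 0.909$, $G_{23} = 0.911$, with $G_{13}$ unchanged.
\item Then $\mathcal{R}^{(l+1)}_{\rm indep} = \{1,2\}$, and both inherited tokens pass validation, since $\gamma_3^{\rm rep} = G_{13} \approx 0.66$.
\item The unmasked statistic $\gamma_2^{\rm cross} = \max(G_{21}, G_{23}) = 0.911$ rejects token~$2$, so the cascade returns $\{1,3\}$, which is not a superset of $\{1,2\}$.
\item With your mask, $\gamma_2^{\rm cross} = G_{21} = 0.909$, token~$2$ is added, and inclusion holds.
\end{itemize}

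The offending rotation can be made arbitrarily small, so no Lipschitz hypothesis can restore exact inclusion. The proposition is false for Algorithm~\ref{alg:cascade} as literally written. Stating and proving it for the causally masked line~8, as you do, is the correct resolution.
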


\begin{proof}
A token $t$ is representative under independent selection if its
maximum correlation with all earlier tokens is below $1 - \tau^2$.
The cascade checks the maximum correlation of non-representative
tokens against the valid inherited representatives.  If the valid
inherited representatives are a subset of all tokens, the cascade
correlation $\gamma_t^{\rm cross}$ is computed against a smaller
comparison set, so $\gamma_t^{\rm cross} \leq \gamma_t^{\rm full}$.
Any token with $\gamma_t^{\rm full} < 1 - \tau^2$ also satisfies
$\gamma_t^{\rm cross} < 1 - \tau^2$ and is therefore included in
$\mathcal{R}_{\rm new}$.
\end{proof}

The consequence is that the cascade representative set is conservative:
it includes all true representatives plus possibly some extras.  The
extras do not degrade attention quality (they simply increase $r$
slightly, reducing compression); the important property is that no
informative token is missed.

\section{Connection to Gated Subspace Inference}
\label{sec:connection}

This section makes explicit the structural parallel between cascade
ADA and the Gated Subspace Inference method~\cite{Thomas2026GSI}.

GSI exploits the low effective rank of the activation subspace in the
hidden dimension ($d$-axis).  At each layer, the activation vector
$x \in \R^d$ is decomposed as $x = V_k V_k^\top x + r$, and the
linear-layer output is computed as $y = Mg + Wr$ where $M = WV_k$ is
a cached low-rank image and $g = V_k^\top x$ is the subspace
projection.  A per-token gate on $\norm{r}/\norm{x}$ controls whether
the residual correction $Wr$ is computed.  The activation basis $V_k$
is coherent across depth, enabling a cascade initialization that
reduces calibration cost by $96\%$.

Cascade ADA exploits the low effective rank of the token activation
matrix in the token dimension ($T$-axis).  At each layer, the token
set $\{1, \ldots, T\}$ is partitioned into representatives
$\mathcal{R}$ and redundant tokens $\bar{\mathcal{R}}$, and the
attention is computed on the $r \times r$ representative problem.  The
representative set is coherent across depth, enabling a cascade
inheritance that reduces selection cost by $22$--$63\%$.

The two cascades operate on orthogonal dimensions.

\begin{table}[ht]
\centering
\caption{The two orthogonal cascades.}
\label{tab:two_cascades}
\begin{tabular}{lll}
\toprule
& GSI (hidden dimension) & Cascade ADA (token dimension) \\
\midrule
Object   & Activation basis $V_k$
         & Representative set $\mathcal{R}$ \\
Axis     & $d$-axis ($d \to k$) & $T$-axis ($T \to r$) \\
Target   & Linear layers ($Wx$) & Attention ($QK^\top V$) \\
Cascade  & Basis inheritance & Set inheritance \\
Savings  & $96\%$ of basis updates & $22$--$63\%$ of Gram ops \\
Quality  & PPL ratio $0.99$, gen $100\%$ & Superset of true reps \\
\bottomrule
\end{tabular}
\end{table}

The combined system applies GSI to all linear layers and cascade ADA
to all attention layers.  The linear-layer cost is reduced by a factor
of $d/k$ (GSI), and the attention selection cost is reduced by a
factor of $T/r$ (cascade ADA).  The attention computation itself is
reduced by a factor of $(T/r)^2$ (ADA).  Together they cover the full
transformer forward pass.

The depth coherence is the same phenomenon in both cases.  The
transformer block $T^{(l)}$ is Lipschitz-continuous, so both the
activation subspace and the representative token set propagate
coherently through depth.  The fact that both cascades work is a
single structural property of the trained network: the information
carried by the activations is confined to a low-dimensional manifold
in both the hidden and token dimensions, and this manifold rotates
slowly through the depth of the model.

\section{Numerical experiments}
\label{sec:experiments}

This section presents the experimental validation on three model
families.  All experiments run on AMD MI300X using PyTorch 2.x
with ROCm.

\subsection{Models and data}

Three model families are evaluated: GPT-2 124M~\cite{Radford2019}
($d = 768$, $L = 12$), GPT-J 6B~\cite{WangGPTJ} ($d = 4096$,
$L = 28$), and OPT 6.7B~\cite{Zhang2022OPT} ($d = 4096$, $L = 32$).
The input is a $512$-token sequence of diverse English text
(mathematical exposition, cooking instructions, financial reporting,
and space exploration narrative), representative of the mixed-domain
workloads in agentic inference.

\subsection{Representative count profile}

Table~\ref{tab:rprofile} reports the independent representative count
$r^{(l)}$ and the compression ratio $T / r^{(l)}$ at selected layers
for each model at $\tau = 0.30$.

\begin{table}[ht]
\centering
\caption{Independent representative count $r^{(l)}$ at $\tau = 0.30$,
$T = 512$.}
\label{tab:rprofile}
\begin{tabular}{llrrr}
\toprule
Model & Layer & $r$ & $r/T$ & Compression \\
\midrule
GPT-2 & 0  & 384 & 0.75 & 1.3$\times$ \\
GPT-2 & 4  & 215 & 0.42 & 2.4$\times$ \\
GPT-2 & 11 & 220 & 0.43 & 2.3$\times$ \\
\midrule
GPT-J & 0  &  69 & 0.13 & 7.4$\times$ \\
GPT-J & 5  & 125 & 0.24 & 4.1$\times$ \\
GPT-J & 15 & 255 & 0.50 & 2.0$\times$ \\
GPT-J & 27 & 207 & 0.40 & 2.5$\times$ \\
\midrule
OPT   & 0  & 512 & 1.00 & 1.0$\times$ \\
OPT   & 1  &   1 & 0.00 & 512$\times$ \\
OPT   & 10 &  37 & 0.07 & 13.8$\times$ \\
OPT   & 20 & 239 & 0.47 & 2.1$\times$ \\
OPT   & 31 & 231 & 0.45 & 2.2$\times$ \\
\bottomrule
\end{tabular}
\end{table}

The representative count profile has a characteristic shape that
varies by architecture.  GPT-J shows a monotone increase from $r = 69$
at layer $0$ to a peak of $r = 261$ at layer $11$, followed by a
decrease to $r = 207$ at layer $27$.  The low count at layer $0$
reflects the finite vocabulary: the $512$ tokens draw from a small
fraction of the $50{,}257$-token vocabulary, and many share similar
embeddings in the $4096$-dimensional space.  As contextual information
is added through the attention mechanism, tokens become increasingly
differentiated, raising the representative count.  The decrease at the
final layers reflects the consolidation of the representation toward
the output head, where the model discards task-irrelevant variation.

GPT-2 shows a similar but flatter profile: $r$ decreases from $384$
at layer $0$ to $215$ at layer $4$, then stabilizes around $220$--$252$.
The higher initial count ($384$ vs $69$ for GPT-J) reflects the
smaller hidden dimension ($d = 768$ vs $4096$): at $d = 768$, the
embedding vectors are more tightly packed and less correlated in
cosine similarity.

\subsection{The OPT anomaly}

OPT shows a striking discontinuity.  At layer $0$, all $512$ tokens
are representative ($r = 512$) because the learned positional
embeddings produce a distinct direction for every token position.  At
layer $1$, only $r = 1$ token is representative: the first transformer
block maps the position-dominated embeddings to a nearly constant
hidden state, collapsing all $512$ representations to a single
direction.  The representative count then grows gradually from $r = 1$
at layer $1$ to $r = 239$ at layer $20$ as contextual information
rebuilds the token differentiation.

This anomaly has two consequences.  First, ADA achieves extreme
compression at layers $1$--$9$ ($r \leq 37$, compression $\geq
13.8\times$), making OPT's early attention layers essentially free.
Second, the cascade savings are highest for OPT ($63\%$) because the
early layers contribute almost no Gram operations.

The anomaly is specific to OPT's architecture, which uses learned
positional embeddings added to the token embeddings before the first
layer.  GPT-2 and GPT-J use rotary positional embeddings (applied
inside the attention computation, not to the residual stream), which
do not produce the same position-dominated embedding structure.  The
fact that cascade ADA works across both positional embedding schemes
confirms that the depth coherence is a property of the transformer
block, not of the embedding architecture.

\subsection{Full per-layer data for GPT-J}

Table~\ref{tab:gptj_full} reports the complete per-layer data for
GPT-J 6B, including independent representative count, cascade
representative count, Jaccard overlap, and turnover rate.

\begin{table}[ht]
\centering
\caption{Complete per-layer cascade ADA data for GPT-J 6B,
$\tau = 0.30$, $T = 512$.}
\label{tab:gptj_full}
\begin{tabular}{rrrrrrr}
\toprule
Layer & $r_{\rm ind}$ & $r_{\rm casc}$ & Jaccard & Adds & Rem &
  Turn \\
\midrule
0  &  69 &  69 & ---   & ---  & --- &   --- \\
1  &  80 & 200 & 0.863 & 131  &   0 & 190\% \\
2  &  89 & 123 & 0.899 &  34  & 111 &  73\% \\
3  & 107 & 164 & 0.832 &  69  &  28 &  79\% \\
4  & 112 & 147 & 0.955 &  38  &  55 &  57\% \\
5  & 125 & 243 & 0.881 & 125  &  29 & 105\% \\
10 & 241 & 264 & 0.945 &   5  &   3 &   3\% \\
15 & 255 & 278 & 0.985 &   2  &   5 &   3\% \\
20 & 234 & 263 & 0.991 &  10  &  17 &  10\% \\
25 & 225 & 243 & 0.982 &   5  &  10 &   6\% \\
27 & 207 & 237 & 0.937 &  20  &  23 &  18\% \\
\bottomrule
\end{tabular}
\end{table}

The data divides cleanly into two regimes.  Layers $0$--$5$ show high
turnover ($57$--$190\%$) and large cascade-to-independent ratios
($1.3$--$2.5$), indicating that the representative set is evolving
rapidly as the model builds contextual representations.  Layers
$10$--$27$ show low turnover ($3$--$18\%$) and small ratios
($1.08$--$1.14$), indicating that the representative set has
stabilized and the cascade is operating efficiently.  The transition
occurs around layer $8$--$10$, the same depth at which the activation
subspace coherence exceeds $0.90$ in the GSI
analysis~\cite{Thomas2026GSI}.

\subsection{Gram operation savings}

Table~\ref{tab:gram_savings} reports the total Gram operation count
for independent selection versus cascade selection, and the resulting
savings.

\begin{table}[ht]
\centering
\caption{Gram operation savings from cascade token selection,
$\tau = 0.30$, $T = 512$.}
\label{tab:gram_savings}
\begin{tabular}{lrrrr}
\toprule
Model & Independent & Cascade & Savings & Jaccard \\
\midrule
GPT-2  & 3.15M & 2.47M & 22\% & 0.863 \\
GPT-J  & 7.34M & 3.51M & 52\% & 0.940 \\
OPT    & 8.39M & 3.08M & 63\% & 0.825 \\
\bottomrule
\end{tabular}
\end{table}

The savings increase with model depth: GPT-2 ($L = 12$) saves $22\%$,
GPT-J ($L = 28$) saves $52\%$, OPT ($L = 32$) saves $63\%$.  This is
because the cascade amortizes the initial full-Gram cost at layer $0$
across all subsequent layers, and deeper models have more layers to
amortize over.

\subsection{Cost model}

The following cost model quantifies the end-to-end benefit of cascade
ADA for the attention selection step.

\emph{Independent ADA.}  At each of $L$ layers, the selection step
computes the $T \times T$ Gram matrix at cost $T^2 d$ FLOPs.  Total
selection cost: $L \cdot T^2 \cdot d$.  For GPT-J ($L = 28$,
$T = 512$, $d = 4096$): $28 \times 512^2 \times 4096 = 30.1$~GFLOP.
The attention cost at $r = 205$ (mean representative count) and
$d_h = 256$ is $L \cdot 2 r^2 d_h \cdot h = 28 \times 2 \times
205^2 \times 256 \times 16 = 19.3$~GFLOP.  The selection cost
exceeds the compressed attention cost.

\emph{Cascade ADA.}  At layer $0$, the full Gram is computed ($T^2 d$
FLOPs).  At layers $1$--$27$, the cross-Gram is computed ($T \cdot r
\cdot d$ FLOPs per layer, where $r$ is the inherited representative
count).  Total selection cost: $T^2 d + (L-1) T \bar{r} d = T d
(T + (L-1)\bar{r})$.  For GPT-J: $512 \times 4096 \times (512 +
27 \times 240) = 14.7$~GFLOP.  The cascade reduces the selection
cost from $30.1$ to $14.7$~GFLOP, a $51\%$ saving consistent with
the measured $52\%$.

At $T = 2048$, the independent selection cost scales as $T^2$:
$28 \times 2048^2 \times 4096 = 481$~GFLOP.  The cascade cost scales
as $T \cdot \bar{r}$: $2048 \times 4096 \times (2048 + 27 \times 240)
= 71$~GFLOP (assuming $\bar{r}$ does not grow with $T$, which holds
when the vocabulary is the binding constraint on $r$).  The cascade
savings at $T = 2048$ would be $85\%$, and the selection speedup
$T/\bar{r} = 2048/240 = 8.5\times$.

\subsection{Cascade versus independent representative count}

Table~\ref{tab:cascade_vs_indep} reports the cascade representative
count compared to the independent count at selected layers for all
three models.

\begin{table}[ht]
\centering
\caption{Cascade/independent representative ratio at deep layers.}
\label{tab:cascade_vs_indep}
\begin{tabular}{llrrr}
\toprule
Model & Layer range & $\bar{r}_{\rm indep}$ & $\bar{r}_{\rm casc}$ &
  Ratio \\
\midrule
GPT-2 & 6--11  & 243 & 320 & 1.32 \\
GPT-J & 10--27 & 237 & 262 & 1.11 \\
OPT   & 15--31 & 237 & 279 & 1.18 \\
\bottomrule
\end{tabular}
\end{table}

At the deep layers where the cascade is most effective, the cascade
representative count exceeds the independent count by $11$--$32\%$.
This overhead is the cost of the cascade's conservative policy: it
inherits representatives from the previous layer and only removes them
when they become truly redundant.  The extra representatives do not
degrade attention quality (they provide a superset of the independent
set) but they reduce the attention compression ratio by the same
factor.  For GPT-J, the cascade $r$ is $262$ versus independent $r =
237$, giving an attention compression of $(512/262)^2 = 3.8\times$
versus $(512/237)^2 = 4.7\times$, a $19\%$ reduction in
attention speedup.  The cascade therefore trades a modest reduction in
attention compression for a $52\%$ reduction in selection cost, a
favorable tradeoff when the selection cost is the bottleneck.

\subsection{Scaling predictions}

The cascade savings scale with $T/r$.  Table~\ref{tab:scaling}
reports the predicted cascade savings at longer sequence lengths,
assuming $r$ grows sublinearly with $T$ (the embedding-determined
representative count at layer $0$ is bounded by the vocabulary
overlap, not the sequence length).

\begin{table}[ht]
\centering
\caption{Predicted cascade savings at longer sequences (GPT-J,
$\tau = 0.30$).  Assumes $r$ sublinear in $T$.}
\label{tab:scaling}
\begin{tabular}{rrrrr}
\toprule
$T$ & $\bar{r}$ (est.) & $T/\bar{r}$ & Cascade savings & $S_{\rm select}$ \\
\midrule
512   & 240 & 2.1 & 52\% & 2.1$\times$ \\
1024  & 280 & 3.7 & 72\% & 3.7$\times$ \\
2048  & 320 & 6.4 & 84\% & 6.4$\times$ \\
4096  & 380 & 10.8 & 91\% & 10.8$\times$ \\
8192  & 450 & 18.2 & 95\% & 18.2$\times$ \\
\bottomrule
\end{tabular}
\end{table}

The selection speedup reaches $18\times$ at $T = 8192$, where the
cascade reduces the selection cost from $O(T^2 d) \approx 2.7$~TFLOP
to $O(T \bar{r} d) \approx 150$~GFLOP.  These estimates assume that
the Jaccard overlap remains high at long sequences, which is plausible
because the Lipschitz argument
(Section~\ref{sec:coherence}) does not depend on $T$.

\subsection{Attention quality}

The cascade modifies the representative set (adding and removing
tokens relative to independent selection) and therefore changes the
attention output.  Two effects must be assessed.  First, the cascade
may include extra representatives (false positives): these do not
degrade attention quality because they provide a superset of the
independent set and the attention computation is exact on the
representative set.  Second, the cascade may fail to include a
representative that independent selection would have chosen (false
negative): Proposition~\ref{prop:quality} shows this cannot happen
because the cascade checks all non-representatives against the
inherited set, and any non-representative that is decorrelated from
all inherited representatives is added.

The practical question is whether the cascade's larger representative
count changes the attention output relative to the independent
selection.  For redundant tokens, the nearest-representative
approximation depends on which representatives are available.  When
the cascade includes extra representatives, a redundant token may be
assigned to a different nearest representative than under independent
selection, but the extra representative is by definition correlated
with the redundant token (otherwise it would not have been kept), so
the approximation quality is at least as good.

The cascade therefore produces attention outputs that are at least as
accurate as independent ADA selection.  The only cost is a modest
increase in the representative count ($11$--$32\%$ at the deep layers)
and the corresponding increase in the $r \times r$ attention cost.
For GPT-J at the deep layers, this means $262 \times 262 = 68{,}644$
attention entries versus $237 \times 237 = 56{,}169$ under independent
selection, a $22\%$ increase in attention FLOPs offset by a $52\%$
decrease in selection FLOPs.

\subsection{Summary of results}

Table~\ref{tab:summary_results} summarizes the complete experimental
results across all three models.

\begin{table}[ht]
\centering
\caption{Summary of cascade ADA results, $\tau = 0.30$, $T = 512$.}
\label{tab:summary_results}
\begin{tabular}{lrrrrrr}
\toprule
Model & $L$ & $d$ & $\bar{r}_{\rm ind}$ & Jaccard & Gram sav.
  & Deep turn. \\
\midrule
GPT-2  & 12 &  768 & 278 & 0.863 & 22\% & 13--30\% \\
GPT-J  & 28 & 4096 & 205 & 0.940 & 52\% & 3--18\% \\
OPT    & 32 & 4096 & 142 & 0.825 & 63\% & 2--7\% \\
\bottomrule
\end{tabular}
\end{table}

The results confirm three claims.  First, the representative set is
coherent across depth in all three architectures (Jaccard $0.83$--$0.94$).
Second, the cascade reduces Gram operations by $22$--$63\%$ at
$T = 512$, with savings increasing with model depth and compression
ratio.  Third, the cascade is conservative: it never misses a true
representative, and the additional representatives increase the
$r \times r$ attention cost by at most $22\%$.

\section{Related work}
\label{sec:related}

This section positions cascade ADA relative to prior work on efficient
attention.

\subsection{Efficient attention mechanisms}

FlashAttention~\cite{Dao2022} reduces the memory-bandwidth cost of
attention by tiling the $QK^\top$ computation and avoiding
materialization of the $T \times T$ attention matrix.
FlashAttention does not reduce the $O(T^2)$ FLOPs; it reduces the
HBM traffic.  FlashAttention-2~\cite{Dao2023FA2} further optimized
the tiling for modern GPUs, achieving near-optimal bandwidth
utilization.  Cascade ADA reduces both the FLOPs (by computing
attention on $r \ll T$ representatives) and the selection overhead
(by inheriting the representative set across depth).  The two
methods are complementary: FlashAttention tiles the $r \times r$
attention that cascade ADA computes, so the combination achieves
both IO-optimal tiling and algorithmic compression.

Linformer~\cite{Wang2020Linformer} projects the key and value
matrices to a fixed low-dimensional space via learned projection
matrices $E, F \in \R^{k \times T}$, reducing attention to $O(Tk)$.
The projections are static and learned during training, limiting the
method to models that were trained with the Linformer architecture.
Cascade ADA selects representatives dynamically based on the input
and requires no training or architectural modification.
Performer~\cite{Choromanski2021} approximates softmax attention via
random feature maps, replacing the $T \times T$ matrix with a
factored product, achieving $O(T)$ complexity at the cost of
approximation quality on tasks requiring precise attention patterns.

Sparse attention methods restrict the attention mask to a
predetermined pattern.  BigBird~\cite{Zaheer2020} combines local
windows, random connections, and global tokens, achieving $O(T)$
complexity for long documents.  Longformer~\cite{Beltagy2020}
uses a sliding window with global attention on selected tokens.
These methods use fixed sparsity patterns determined at architecture
design time; cascade ADA determines the sparsity dynamically from
the Gram structure of the current input.

\subsection{Token pruning and dynamic token selection}

Token pruning methods remove tokens during the forward pass based on
importance scores.  Rao et al.~\cite{Rao2021} introduced DynamicViT,
which progressively prunes tokens in vision transformers using a
lightweight prediction module.  Kim et al.~\cite{Kim2022} applied
token merging (ToMe) to reduce the token count by merging similar
tokens based on cosine similarity, achieving up to $2\times$ speedup
on ViTs with minimal accuracy loss.  Bolya et
al.~\cite{Bolya2023} extended ToMe to video transformers and
diffusion models.

Token merging is the closest mechanism to ADA's representative
selection: both identify similar tokens via cosine similarity and
reduce the effective token count.  The difference is that ToMe
merges tokens permanently (averaging their representations), while
ADA selects representatives and approximates the attention output of
redundant tokens from the nearest representative without modifying
any representation.  The cascade adds depth coherence to the
selection: once a token is identified as representative at layer $l$,
it remains representative at layer $l+1$ with high probability,
avoiding the re-identification cost.

\subsection{Depth-coherent computation}

The cascade mechanism is the token-dimension instance of
depth-coherent computation.  The activation subspace cascade
(GSI~\cite{Thomas2026GSI}) exploits the same Lipschitz property for
the hidden-dimension basis.  Early exit methods
(CALM~\cite{Schuster2022}, DeeBERT~\cite{Xin2020}) exploit
depth coherence of prediction confidence, exiting the layer stack
when confidence exceeds a threshold.
Mixture-of-Depths~\cite{Raposo2024} routes tokens to full
computation or identity based on a learned router, making the
per-token depth adaptive.

The unifying observation across all depth-coherent methods is that the
transformer block is Lipschitz-continuous, so structures that hold at
layer $l$ (subspace alignment, token redundancy, prediction
confidence, routing decision) persist at layer $l+1$ with small
perturbations.  Cascade ADA is the first method to exploit this
persistence specifically for the attention selection step, reducing
the selection cost from $O(T^2 d)$ to $O(Trd)$ per layer without
degrading the representative quality.

\section{Open problems and future work}
\label{sec:open}

This section identifies five directions for future work.

\subsection{Adaptive threshold across depth}

The current implementation uses a fixed Gram threshold $\tau$ across
all layers.  The representative count profile
(Table~\ref{tab:rprofile}) shows that the optimal $\tau$ varies with
depth: the early layers may benefit from a tighter threshold (fewer
representatives, more compression) while the middle layers may require
a looser threshold.  An adaptive threshold $\tau^{(l)}$ that tracks
the layer-dependent redundancy structure could improve the
compression-quality tradeoff.

The adaptive threshold could be set from the cascade turnover rate: a
layer with low turnover (stable representative set) can tolerate a
tighter threshold, while a layer with high turnover (evolving set)
should use a looser threshold to avoid excessive additions and
removals.  The turnover rate is available at each layer as a byproduct
of the cascade validation step, so the adaptive threshold adds no
additional computational cost.

\subsection{Long-context scaling}

The experiments in this paper are at $T = 512$.  The cascade savings
scale as $T/r$ (Table~\ref{tab:scaling}), which increases with $T$ if
$r$ grows sublinearly.  Validation at $T = 2048$, $4096$, and $8192$
on Llama-3 or Mistral would confirm the scaling prediction and
establish the regime where cascade ADA provides order-of-magnitude
savings on the selection step.

The long-context regime is where cascade ADA is most impactful.  At
$T = 8192$, the independent Gram cost is $T^2 d = 8192^2 \times 4096
\approx 275$~GFLOP per layer, while the cascade cost is $T r d =
8192 \times 450 \times 4096 \approx 15$~GFLOP per layer.  The
cascade reduces the selection cost from the dominant term in the
forward pass to a negligible term, enabling ADA to achieve its full
theoretical compression at long sequences.

The key assumption is that $r$ does not grow linearly with $T$.  For
autoregressive generation with a fixed prompt, the tokens generated
after the prompt tend to be contextually similar to the prompt tokens,
so $r$ should grow sublinearly.  For diverse batched inputs, $r$ may
grow faster.  Empirical measurement at long $T$ is needed to
determine the growth rate.

\subsection{Joint cascade with GSI}

The GSI cascade (hidden-dimension basis inheritance) and the ADA
cascade (token-dimension set inheritance) are currently independent.
A joint cascade that inherits both the activation basis and the
representative set from layer $l$ to layer $l+1$ in a single pass
would further reduce the overhead.

The interaction between the two cascades is an open question.  The
activation basis $V_k$ determines the subspace in which the QKV
projections are computed; the representative set $\mathcal{R}$
determines which tokens participate in attention.  If the basis
change from layer $l$ to $l+1$ is small (as the GSI depth coherence
confirms), then the QKV projections at layer $l+1$ are close to those
at layer $l$, which means the Gram structure at layer $l+1$ is close
to the Gram structure at layer $l$.  The two cascades therefore
reinforce each other: the stability of the activation basis supports
the stability of the representative set.

\subsection{Kernel-level implementation}

The cascade token selection (Algorithm~\ref{alg:cascade}) requires
two matrix products: the $r \times r$ Gram among inherited
representatives and the $(T-r) \times r$ cross-Gram.  On GPU
hardware, these can be computed as a single batched GEMM with the
inherited representatives stored in shared memory (LDS on MI300X).
The row-normalization and maximum-correlation computation are
element-wise operations that fuse naturally with the GEMM output.

The kernel design is simpler than the GSI gated dispatch (which
requires a per-token branch between fast-path and slow-path
GEMMs) because the cascade selection produces a single output (the
updated representative set) that is used downstream by the attention
kernel.  The selection kernel and the attention kernel can be
pipelined: the selection at layer $l+1$ runs concurrently with the
attention at layer $l$, hiding the selection latency entirely.

\subsection{Extension to multi-head attention}

The current implementation selects a single representative set shared
across all attention heads.  Different heads may attend to different
aspects of the input, and a token that is redundant for one head may
be informative for another.  A per-head representative set would
improve the compression-quality tradeoff at the cost of $h$ separate
cascade validations per layer (one per head).

The per-head cascade cost is $h \times T r_h d_h$ FLOPs, where
$r_h$ is the per-head representative count and $d_h = d/h$ is the
head dimension.  If $r_h < r$ (per-head selection is more selective
because head-specific redundancy is stronger than global redundancy),
the per-head cascade may be cheaper than the shared cascade despite
the factor of $h$.  Whether this tradeoff is favorable depends on the
head-specific Gram structure, which has not been measured in the
present work.

\section{Conclusion}
\label{sec:conclusion}

The cascade token selection method reduces the cost of ADA's
representative selection from $O(T^2 d)$ to $O(T r d)$ per layer by
inheriting the representative set across depth.  The method exploits
the Lipschitz continuity of the transformer block, which ensures that
tokens that are redundant at layer $l$ remain redundant at layer
$l+1$ with high probability.  The cascade is conservative
(Proposition~\ref{prop:quality}): it never misses a true
representative, producing a superset of the independent selection
with at most $11$--$32\%$ additional representatives at the deep
layers.

Validation on three model families demonstrates Gram operation savings
of $22\%$ (GPT-2, $L = 12$), $52\%$ (GPT-J, $L = 28$), and $63\%$
(OPT, $L = 32$), with mean Jaccard overlap of $0.86$ to $0.94$
between consecutive layers.  The savings increase with model depth
because the cascade amortizes the initial full-Gram cost across more
layers.  The predicted savings at $T = 8192$ reach $95\%$, reducing
the selection cost from the dominant term in the forward pass to a
negligible contribution.

The cascade reveals a structural property of trained transformers:
the set of informative tokens propagates coherently through the depth
of the network.  This coherence, combined with the analogous depth
coherence of the activation subspace
(GSI~\cite{Thomas2026GSI}), suggests that the transformer forward
pass operates on a low-dimensional manifold in both the hidden and
token dimensions.  The manifold rotates slowly through the depth of
the model, and both its subspace structure (captured by GSI) and its
cluster structure (captured by cascade ADA) are stable across layers.

The combined system of GSI and cascade ADA covers the full transformer
forward pass.  GSI reduces the linear-layer weight reads by a factor of
$d/k$ on the $d$-axis.  Cascade ADA reduces the attention selection
cost by a factor of $T/r$ and the attention computation by a factor of
$(T/r)^2$ on the $T$-axis.  The two reductions are orthogonal and
compose multiplicatively.  For GPT-J 6B at $k = 256$, $\varepsilon =
0.10$, and $\tau = 0.30$: the linear-layer effective speedup is
$15.6\times$ (GSI), the attention selection speedup is $2.1\times$
(cascade ADA), and the attention computation speedup is $6.2\times$
(ADA at $r = 205$).  The effective parameter count for inference is
$386$M out of $6$B total, and the effective token count for attention
is $205$ out of $512$ total.  The inference cost is proportional to
the effective complexity of the input, not the nominal complexity of
the model.

\end{document}